\newtheorem{defn}{Definition}
\newtheorem{prop}{Proposition}
\newtheorem{lemma}{Lemma}
\definecolor{ao}{rgb}{0.0, 0.0, 1.0}
\definecolor{amethyst}{rgb}{0.6, 0.4, 0.8}
\definecolor{Green}{rgb}{0.55, 0.71, 0.0}
\newcommand{\cls}{\mathcal{C}}
\newcommand{\con}{\mathit{conf}}
\newcommand{\zone}{\mathcal{Z}}
\newcommand{\model}{\mathcal{M}}
\newcommand{\sep}{\overline{\mathcal{S}}^\model}
\newcommand{\stab}{\underline{\mathcal{S}}^\model}
\newcommand{\tr}{\mathit{Tr}}
\newcommand{\ts}{\mathit{Ts}}
\newcommand{\vs}{\mathit{Vs}}
\newcommand{\friend}{\mathit{F}_\model}
\newcommand{\nfriend}{\overline{\mathit{F}}_\model}
\newcommand{\dis}{\mathit{d}}
\newcommand{\Dis}{\mathit{D}}
\title{A  Geometric Method for Improved  Uncertainty Estimation in Real-time}
\author[1]{Gabriella~Chouraqui\thanks{Equal Contribution}}
\author[1]{Liron~Cohen}
\author[1]{Gil~Einziger}
\author[1]{Liel~Leman$^*$}
\affil[1]{%
    Department of Computer Science\\
    Ben-Gurion University of the Negev, Israel
}
\date{October 2021}
\begin{document}
\maketitle

\begin{abstract}
    Machine learning classifiers are probabilistic in nature, and thus inevitably involve uncertainty. Predicting the probability of a specific input to be correct is called uncertainty (or confidence) estimation and is crucial for risk management. 
    Post-hoc model calibrations can improve models' uncertainty estimations without the need for retraining, and without changing the model.
Our work puts forward a geometric-based approach
for uncertainty estimation.
    Roughly speaking, we use the geometric distance of the current input from the existing training inputs as a signal for estimating uncertainty and then calibrate that signal (instead of the model's estimation) using standard post-hoc calibration techniques. 
    We show that our method yields better uncertainty estimations than recently proposed approaches by extensively evaluating multiple datasets and models. In addition, we also demonstrate the possibility of performing our approach in near real-time applications. 

Our code is available at our Github~\citep{Code}.

\end{abstract}

\section{Introduction}

Machine learning models such as neural networks, random forests, and gradient boosted trees are extensively used in domains ranging from computer vision to
transportation and are slowly revolutionizing computer science~\citep{Survey1,transport}. 
Dealing with uncertainty is a fundamental challenge for all machine learning-based applications. In principle, classifications are always probabilistic, implying that miss-classifications are inevitable.

\emph{Uncertainty calibration} is the process of adapting machine learning models' confidence estimations to be consistent with the actual success probability of the model~\citep{pmlr-v70-guo17a}.  
The model's \emph{confidence} evaluation on its classifications, i.e., the model's prediction of the success ratio on a specific input, is an essential aspect of mission-critical machine learning applications as it provides a realistic estimate of the classification's success probability and facilitates informed decisions about the \emph{current} situation. 
Even a very accurate model may run into an unexpected situation, which could then be communicated to the user by the confidence estimation. 
For example, consider an autonomous driver that uses a model to identify and classify traffic signs. The model is very accurate, and in most cases, its classifications are correct with high confidence. However, one day, it encountered a traffic sign obscured by, e.g.,  heavy vegetation.
In such a case, the model's classification is more likely to be incorrect. Thus, estimating confidence (i.e., uncertainty) is an essential tool for assessing unavoidable risks, enabling system designers to address the risks better, potentially avoiding unexpected and catastrophic implications. Our autonomous driver, for example, may reduce the speed and activate additional sensors until it reaches higher confidence. 
Thus, indeed, all popular machine learning models have mechanisms for determining confidence that can be calibrated to maximize the quality of confidence estimations~\citep{Niculescu2005Predicting,CNNCalibration,Ana2019Verified} and there is a concentrated effort to calibrate models better and facilitate more dependable applications~\citep{Survey2,Sun2007}. 

Existing calibration methods can be divided into two types:  \emph{post-hoc}  methods that preform a transformation that maps from classifiers’ raw outputs to their expected probabilities~\citep{NEURIPS2019_8ca01ea9,pmlr-v70-guo17a,gupta2021calibration}, and \emph{ad-hoc} methods that adapt the training process to generate better calibrated design~\citep{ThulasidasanCBB19,pmlr-v97-hendrycks19a}. Post-hoc calibration methods are easier to apply as they do not change the model and do not require us to retrain a model. That said, ad-hoc methods may lead us to better model training in the first place and thus better models. 
With the success of the two approaches, recent approaches suggest using ensemble methods whose estimation is a (weighted) average of multiple calibration methods~\citep{Ashukha2020Pitfalls,pmlr-v161-ma21a}.

This paper presents a post-hoc uncertainty estimation method, but of a different kind. While current post-hoc methods use the model itself as their signal for calibration, we use the training dataset (without modifying the model). 
More precisely, our method is based on geometric notions calculated on the training dataset.
In fact, our geometric choice of the signal is orthogonal to the preformed calibration method   in the sense that we can employ it using various post-hoc calibration methods.

Roughly speaking, we examine the geometric distance of the current input from the existing training inputs and use it to estimate the model's confidence. 
 Intuitively, the confidence is high when the current input is close to training set inputs in the same classification and is far from training set inputs with other classifications. Dually, the confidence would be low when there are very close training set inputs with different classifications.
 To maximize this geometric signal the inputs should be normalized. That is, the size, format, etc. of the images should be consistent. Thus, in this paper, we employ such well behaved datasets.

Our work demonstrates that geometry can facilitate better uncertainty estimations for diverse models and datasets. 
We first provide an algorithm for calculating the maximal geometric \emph{separation} of an input.

However, calculating the geometric separation for an input requires evaluating the whole space of training inputs, making it a computationally expensive method that is not always feasible.  For example, an autonomous driver needs to reach decisions within a short time frame to be effective.  
Therefore, we also suggest a lightweight approximation called \emph{fast-separation}, and show that it provides an approximation of geometric separation.  

Our next challenge is to move from a separation value to a confidence estimation. For this, we apply numerical analysis tools. Thus, to obtain a confidence estimation in real-time we only need to apply a regression function to the calculated separation value. 
Interestingly, our extensive simulation across different models and datasets shows that our geometric-based method yields better confidence estimations when compared to popular libraries used in the industry~\citep{scikit-learn}

, as well as recently proposed calibration methods~\citep{Ana2019Verified,gupta2021distribution,pmlr-v70-guo17a,pmlr-v119-zhang20k}. 
 Furthermore, our evaluation shows that using our method with the fast-separation approximation allows for multiple confidence estimations per second, making it real-time applicable.

\section{Related Works}

The dependability of machine learning models is a key challenge in the research community~\citep{space_bugs}. Various works demonstrate vulnerabilities in popular machine learning models~\citep{Biggio,Biggio2014}
, or show explicit methods to generate adversarial inputs to such models~\citep{Zhou}. Unfortunately, such vulnerabilities are fundamental to the field and cannot be avoided. 

As mentioned above, uncertainty calibration is about estimating the model's success probability of classifying a given example. Post-hoc calibration methods apply some transformation to the model's confidence (without changing the model) such transformations include Temperature Scaling (TS)~\citep{pmlr-v70-guo17a,NEURIPS2019_8ca01ea9}, Ensemble Temperature Scaling (ETS)~\citep{pmlr-v119-zhang20k}, and cubic spline~\citep{gupta2021distribution}.
In brief, these methods are limited by the best learnable mapping between the model's confidence estimations, and the actual confidence. That is, post-hoc calibration methods are limited in mapping each confidence value to another calibrated value. In comparison, our method uses geometric distance as a signal for calibration and its improvement over post-hoc calibration is because geometric distances better differentiate than the model's predicted probabilities in the models and datasets included in our evaluation. 
Another work that uses a geometric distance in this context is \citep{Dalitz09Reject}. There, the confidence score is computed directly from the geometric distance, while we first fit a function on a subset of the data in order to learn the specific behavior of the dataset and model. Moreover, the work in~\citep{Dalitz09Reject} only applies to the k-nearest neighbor model, while our method is applicable to all models.

The recently proposed work of~\citep{Ana2019Verified} uses a fitting function on the confidence values and then divides the inputs into bins of equal size and outputs the function's average in each bin.  The work of \citep{gupta2021distribution} uses a similar idea but divide the inputs into uniform-mass (rather than equal size) bins.
It is interesting to note that while most post-hoc calibration methods are model agnostic, recent methods have begun to look on a neural network non-probabilistic output called logits(before applying softmax)~\citep{CNNCalibration,Ding2020,wenger2019}. Thus, some of the new post-hoc calibration methods are applicable only to neural networks.

Ensemble methods are similar to post-hoc calibration methods as they do not change the model, but they consider multiple signals to determine the model's confidence~\citep{Ashukha2020Pitfalls,pmlr-v161-ma21a}. In principle, ensemble methods complement our approach. For example, one can include our estimator in an ensemble, e.g., by averaging its prediction with other methods. 
Ad-hoc calibration is about training models in new manners aimed to yield better uncertainty estimations. Important techniques in this category include mixup training~\citep{ThulasidasanCBB19}, pre-training~\citep{pmlr-v97-hendrycks19a},  
label-smoothing~\citep{NEURIPS2019_f1748d6b}, data augmentation~\citep{Ashukha2020Pitfalls},  self-supervised learning~\citep{NEURIPS2019_a2b15837}, Bayesian approximation (MC-dropout)~\citep{pmlr-v48-gal16,NIPS2017_84ddfb34},  Deep Ensemble (DE)~\citep{DeepEnsembles}, Snapshot
Ensemble~\citep{SnapshotEnsemble}, Fast Geometric Ensembling (FGE)~\citep{FastEnsembling}, and SWA-Gaussian (SWAG)~\citep{SWAG}.

Ad-hoc calibration is perhaps the best approach in public as it tackles the core of model's calibration directly. However, because it offers specific training methods it is of less use to large and already trained models, and the impact of each work is limited to a specific model type (e.g., DNNs in~\citep{FastEnsembling}). In compression, ad-hoc and ensemble methods (and our own method) often work for numerous models.

Our geometric method is largely inspired by the approach of robustness proving in machine learning models. In this field, formal methods are used to prove that specific inputs are robust to small adversarial perturbations. That is, we formally prove that all images in a certain geometric radius around a specific train-set image receive the same classification~\citep{mooly,KatzBDJK17,marta,Gehr2018AISA,Ehlers17,DBLP:conf/aaai/EinzigerGSS19}. These works are not applicable to uncertainty calibration as they can only produce proves in an offline manner, and thus only to training set inputs rather than to the current input. However, the underlying intuition is that inputs that are geometrically similar should be classified the same also appears in our approach. 
Indeed, our work shows that geometric properties of the inputs can help us quantify the uncertainty in certain inputs, and that in general inputs that are less geometrically separated and are 'on the edge' between multiple classifications are more error prune than points that are highly separated from other classes. Thus our work reinforces the intuition behind applying formal methods to prove robustness and support the intuition that more robust training models would be more dependable.

\section{Geometric Confidence Evaluation}
This section lays the foundations for a geometric estimation of the model's confidence level on a given instance. Our work assumes that the inputs are normalized. That is, they are fixed-sized images and within the same format. Under such conditions, we can measure the geometric distance between various inputs.

Formally, a model receives a data input, $x$, and outputs the pair $\langle\cls(x),\con(x)\rangle$, where $\cls(x)$ is the model's classification of $x$ and $\con(x)$ reflects the probability that the classification is correct.  
Our current work evaluates $\con(x)$ from a geometric point of view. We estimate the environment around $x$ where points are closer to inputs of certain classifications over the others. 
In~\cref{sec:sep} we define a geometric separation measure, and provide an algorithm to calculate it. We explain that such a computation is too cumbersome for real-time systems, and so we suggest a lightweight approximation in~\Cref{sec:stab}. Finally, ~\cref{sec:conf} explains how we use the geometric signal to derive $\con(x)$. That is, mapping a real number corresponding to the geometric separation to a number in $[0,1]$ corresponding to the confidence ratio.

\subsection{Separation Measure}
\label{sec:sep}
We look at the displacement of $x$ compared to nearby data inputs within the training set. Intuitively, when $x$ is close to other inputs in $\cls(x)$ (i.e., inputs with the same classification as $x$) and is far from inputs with other classifications, then the model is correct with a high probability, implying that $\con(x)$ should be high. On the other hand, when there are training inputs with a different classification close to $x$,  we estimate that  $\cls(x)$ is more likely to be incorrect. 

Below we provide definitions that allow us to formalize this intuitive account. In what follows, we consider a model $\model$ to consist of a machine learning model (e.g., a gradient boosted tree or a neural network), along with a labeled train set, $\tr$, used to generate the model. 
We use an implicit notion of distance, and denote by $\dis(x,y)$ the distance between inputs $x$ and $y$, and by $\Dis(x,A)$ the distance between the input $x$ and the set $A$ (i.e., the minimal distance between $x$ and the inputs in $A$).

\begin{defn}[Safe and Dangerous inputs]
\label{def:Tr_C(x)}
Let $\model$ be a model.  
For an input $x$ in the sample space we define:

\[\friend(x) :=\{x'\in \tr : \cls(x')=\cls(x)\}.\]
We denote by $\nfriend(x)$ the set $\tr \setminus \friend(x)$.\\
An input $x\in \mathcal{X}$ is labeled as \emph{safe} if it is closer to $\friend(x)$ than to $\nfriend(x)$, and it is labeled as \emph{dangerous} otherwise.
\end{defn}

\begin{defn}[Zones]
\label{def:zone}
Let $x$ be a safe (dangerous) point. 
A \emph{zone} for $x$, denoted $z_x$, is such that for any input $y$, if $d(x,y)<z_x$, then $\Dis(y,\friend(x))<\Dis(y,\nfriend(x))$ ($\Dis(y,\friend(x))\geq\Dis(y,\nfriend(x))$). 
For each $x$ we denote the maximal such zone by $\zone(x)$.
\end{defn}

In other words, a zone of a safe (dangerous) input $x$, $\zone(x)$, is a  radius around $x$ such that all inputs in this ball are  closer to an input in $\friend(x)$ ($\nfriend(x)$) than to any input in $\nfriend(x)$ ($\friend(x)$), respectively. 

\begin{defn}[Separation]
\label{def:sep}
The separation of a data input $x$ with respect to the model $\model$ is  $\zone(x)$ when  $x$ is a safe input, and $-1\cdot \zone(x)$
when $x$ is a dangerous input. 
\end{defn}

\begin{figure}[!t]
    \centering
    \includegraphics[width=0.45\textwidth]{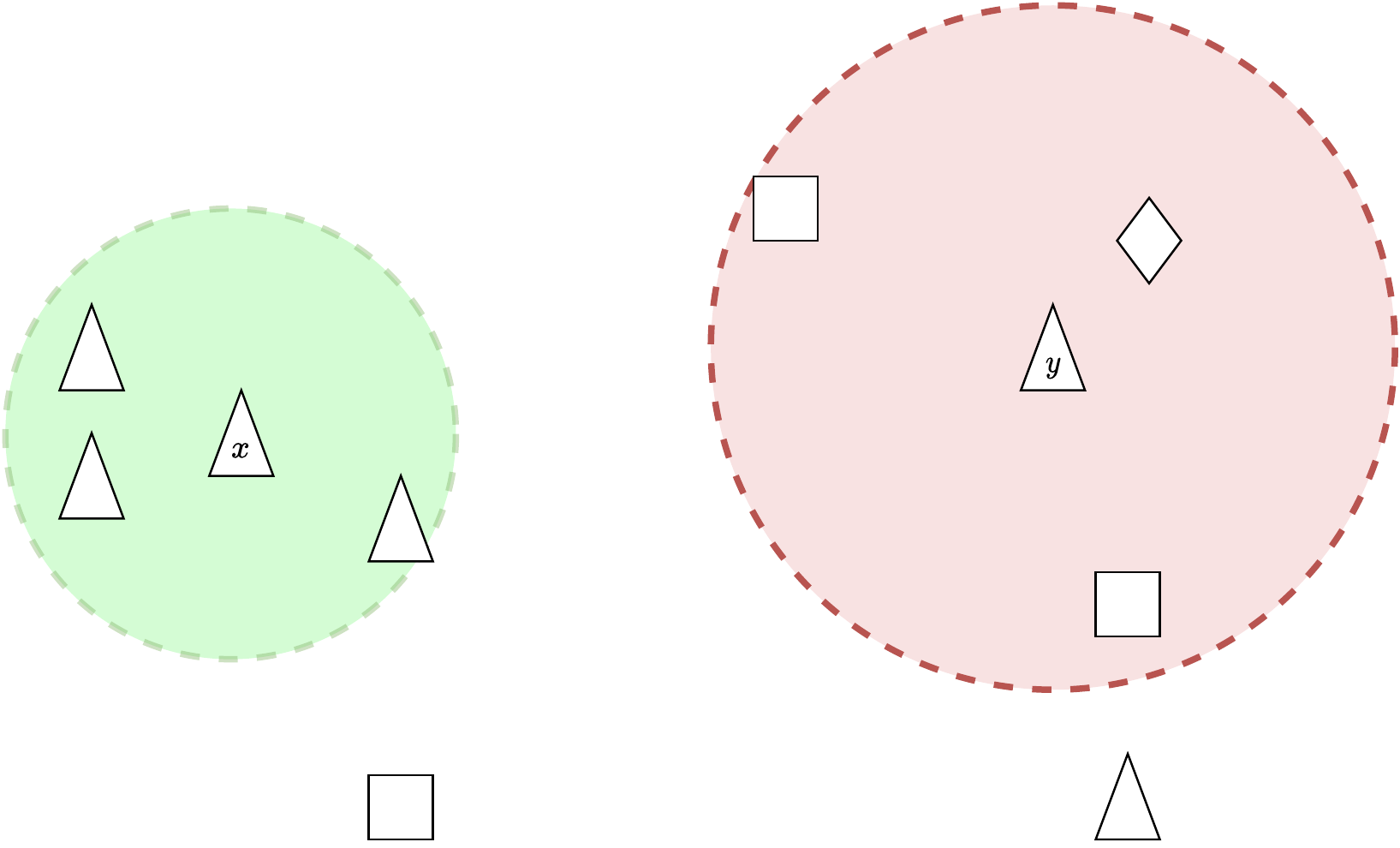}
    \caption{Geometric representation of safe and dangerous inputs, maximal zones, and separation values.
    The various classifications are illustrated via different shapes.}
    \label{fig:Sep-circles}
\end{figure}

That is, the separation of $x$ encapsulates the maximal zone for $x$ (provided by the absolute value) together with an indication of whether the point is safe or dangerous (provided by the sign).
The separation of $x$ depends only on the classification of $x$ by the model and the train set. This is because our definition partitions the inputs in $\tr$ into two sets: one with $\cls(x)$, $\friend(x)$, and one with all other classifications, $\nfriend(x)$. These sets vary between models only when they disagree on the classification of $x$.
Note that $x$'s for which the distance from $\friend(x)$ equals the distance from $\nfriend(x)$ are considered dangerous inputs, and their separation measure will be zero.

As mentioned,~\Cref{def:zone} and~\Cref{def:sep} use an implicit notion of distance. Such notion can accept any distance metric (e.g., $L_1, L_2$ or $L_{\infty}$). However, in this work, we fix the metric to $L_2$ as it is a standard measure for safety features in adversarial machine learning~\citep{Robustness-Moosavi}, in addition to it being easy to illustrate and intuitive to understand.
Furthermore, our methodology relies on calculating the nearest neighbors of a given input, and for $L_2$, this can be done using standard and well-optimized libraries. Accordingly, all our definitions and calculations assume the $L_2$ metrics (Euclidean distances). 

  \Cref{fig:Sep-circles} provides a geometric illustration of safe and danger zones, and separation values. For illustration purposes, the figure uses the $L_2$ norm with two dimensions, whereas our data usually includes many more dimensions. For example, a $30\times30$ traffic sign image will have 900 dimensions.
In the figure, $x$ is a safe input, and the green highlighted ball represents its maximal zone which reflects how far we can get from $x$ and still be closer to training set inputs classified the same as $x$ than any other inputs. 
The input $y$ is a dangerous input, and the red highlighted ball represents its maximal zone which dually represents how far we need to distance ourselves from $x$ so that inputs classified as $x$ become closer than other inputs.  
Thus, $x$ will have a positive separation value, while $y$ will have a negative one.

Next, we provide a formula for calculating the separation of a given input $x$ within the $L_2$ distance metric.

\begin{defn}
\label{def:practicalsep}
Given a model $\model$ and an input $x$, define:
\[\sep(x)= \min_{x'' \in \nfriend(x)} \max_{x' \in \friend(x)} \frac{\dis^2(x,x'') - \dis^2(x,x') }{2\dis(x',x'')}\]
\end{defn}

\begin{lemma}\label{lem:sep}
Let $x, x', x'' \in \mathbb{R}^n$ be inputs such that $\dis(x,x')<\dis(x,x'')$. 
The maximal distance $M(x,x',x'')$ for which if $y\in \mathbb{R}^n$ such that $\dis(x,y)<M(x,x',x'')$, then 
$\dis(y,x')< \dis(y,x'')$ is \[\frac{\dis^2(x,x'')-\dis^2(x,x')}{2\dis(x',x'')}.\]
\end{lemma}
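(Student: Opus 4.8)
The plan is to recast the condition $\dis(y,x')<\dis(y,x'')$ geometrically. The set of points $y$ that are strictly closer to $x'$ than to $x''$ is exactly the open half-space lying on the $x'$-side of the perpendicular bisector hyperplane $H$ of the segment $[x',x'']$; the boundary $H$ is the locus of points equidistant from $x'$ and $x''$. Since by hypothesis $\dis(x,x')<\dis(x,x'')$, the point $x$ itself lies strictly inside this open half-space. The quantity $M(x,x',x'')$ is, by definition, the supremum of radii $r$ such that the open ball of radius $r$ centered at $x$ is contained in this half-space, and I would argue that this supremum equals the Euclidean distance from $x$ to the hyperplane $H$.

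To make the maximality rigorous I would prove two directions. For the forward direction, if $\dis(x,y)$ is strictly less than $\operatorname{dist}(x,H)$, then $y$ cannot reach or cross $H$, so it remains strictly on the $x'$-side and $\dis(y,x')<\dis(y,x'')$ holds; this shows the implication is valid for $M=\operatorname{dist}(x,H)$. For maximality, I would exhibit the foot of the perpendicular from $x$ to $H$: it lies at distance exactly $\operatorname{dist}(x,H)$ from $x$ and, being on $H$, satisfies $\dis(\cdot,x')=\dis(\cdot,x'')$, which violates the strict conclusion. Hence no $M>\operatorname{dist}(x,H)$ can work, pinning the maximal value to $\operatorname{dist}(x,H)$.

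The remaining step is the algebraic evaluation of $\operatorname{dist}(x,H)$. Expanding $\|y-x'\|^2=\|y-x''\|^2$ cancels the $\|y\|^2$ terms and yields a linear equation for $H$ with normal vector $x''-x'$. Applying the standard point-to-hyperplane distance formula and simplifying the numerator via the identity $\dis^2(x,x'')-\dis^2(x,x')=2\langle x,\,x'-x''\rangle+\|x''\|^2-\|x'\|^2$, I expect the numerator to collapse to $\tfrac12\bigl(\dis^2(x,x'')-\dis^2(x,x')\bigr)$ and the denominator to $\dis(x',x'')=\|x''-x'\|$, giving the claimed expression. The hypothesis $\dis(x,x')<\dis(x,x'')$ guarantees this numerator is positive, so the absolute value coming from the distance formula can be dropped cleanly.

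I expect the main obstacle to be not the computation, which is routine linear algebra, but the careful handling of the maximality claim: ensuring the strict-versus-nonstrict inequalities line up so that $M=\operatorname{dist}(x,H)$ is genuinely attained as the supremum (the implication holds at $M=\operatorname{dist}(x,H)$ but fails for every larger radius). The hypothesis placing $x$ strictly on the $x'$-side is exactly what makes $\operatorname{dist}(x,H)$ positive and the whole argument coherent.
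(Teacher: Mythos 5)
Your proposal is correct and takes essentially the same approach as the paper: both identify the set of points closer to $x'$ than to $x''$ as the open half-space bounded by the perpendicular bisector of the segment $[x',x'']$, and recognize $M(x,x',x'')$ as the distance from $x$ to that bisector. The paper reduces to the two-dimensional plane through the three points and leaves the final step as an unverified ``trigonometric calculation,'' whereas you work directly in $\mathbb{R}^n$, carry out the algebra via the point-to-hyperplane distance formula, and explicitly establish maximality using the foot of the perpendicular (which the paper only addresses later, inside the proof of its Proposition~\ref{prop:sep}), so your write-up is, if anything, the more complete of the two.
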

\begin{proof}
Since any three points in space define a plane we focus on the plane defined by these three points. 
\begin{figure}[h!]
    \centering
    \includegraphics[width=0.35\textwidth]{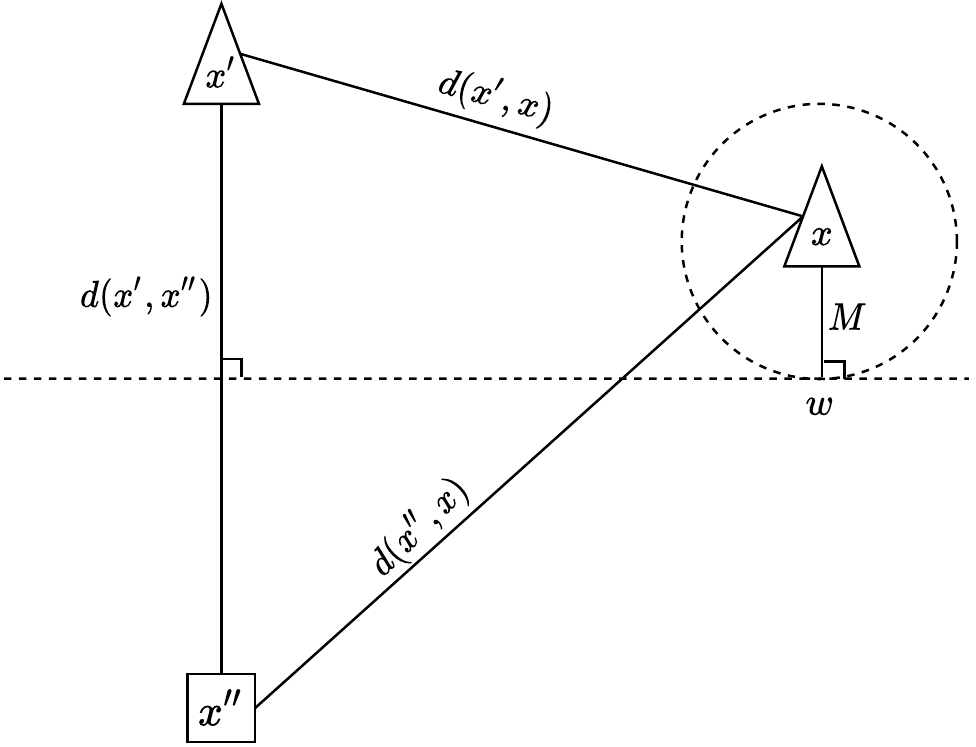}
    \caption{Illustration of the proof of~\Cref{lem:sep}}
    \label{fig:Sep-proof}
\end{figure}

\Cref{fig:Sep-proof} demonstrates a geometric positioning of the points, and the main constructions in the proof.
The perpendicular bisector to the line between $x'$ and $x''$ divides the plane into two parts: one in which all the points are closer to $x''$ than to $x'$ (the lower part in the figure) and one in which all the points are closer to $x'$ than to $x''$ (the upper part in the figure).
Our goal is thus to establish the distance between $x$ and the lower part of the plane. Hence, $M(x,x',x'')$ amounts to the distance from $x$ to the perpendicular bisector to the line between $x'$ and $x''$.
Using trigonometric calculations, it is straightforward to verify that indeed 
$$M(x,x',x'')=\frac{\dis^2(x,x'')-\dis^2(x,x')}{2\dis(x',x'')}.$$\qedhere
\end{proof}

\begin{prop} \label{prop:sep}
$\sep(x)$ is the separation of $x$ with respect to the model $\model$ (in~\Cref{def:sep}).  
\end{prop}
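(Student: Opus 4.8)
The plan is to prove the two cases of \Cref{def:sep} separately --- $x$ safe and $x$ dangerous --- and in each case to read off the value of $\zone(x)$ by applying \Cref{lem:sep} to the individual triples $x,x',x''$. Throughout I would use the reformulation that, for a safe $x$, the quantity $\zone(x)$ is the distance from $x$ to the \emph{bad set} $\{y : \Dis(y,\friend(x)) \ge \Dis(y,\nfriend(x))\}$, since by \Cref{def:zone} it is the largest radius whose open ball avoids that set. Because $\Dis(\cdot,\friend(x))$ and $\Dis(\cdot,\nfriend(x))$ are minima of point-to-point distances, $y$ lies in the bad set exactly when there is a single $x''\in\nfriend(x)$ with $\dis(y,x'')\le \dis(y,x')$ for every $x'\in\friend(x)$. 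Hence the bad set is the union over $x''$ of the regions $B_{x''}=\{y:\dis(y,x'')\le \dis(y,x')\ \text{for all } x'\}$, and since the distance from a point to a union is the minimum of the distances, this produces precisely the outer $\min_{x''}$ of \Cref{def:practicalsep}.

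For a fixed enemy $x''$ I would then argue that $\Dis(x,B_{x''})$ equals $\max_{x'\in\friend(x)} M(x,x',x'')$. Each region $B_{x''}$ is the intersection over $x'$ of the half-spaces bounded by the perpendicular bisector of $x'$ and $x''$, and \Cref{lem:sep} identifies $M(x,x',x'')$ with the distance from $x$ to one such bisector (equivalently, to the half-space of points nearer to $x''$ than to $x'$). Since $x$ is safe it is strictly nearer to its closest friend than to $x''$, so $\dis(x,x')<\dis(x,x'')$ holds for that friend, \Cref{lem:sep} applies and returns a positive value; this also forces $\sep(x)>0$, matching the sign demanded by \Cref{def:sep}. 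The dangerous case I would handle symmetrically: reversing the roles of $\friend(x)$ and $\nfriend(x)$ flips every inequality, the relevant triples satisfy $\dis(x,x'')\le \dis(x,x')$, each term of the formula becomes non-positive, and the min--max returns $-\zone(x)$.

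The hard part will be the fixed-enemy identification $\Dis(x,B_{x''})=\max_{x'}M(x,x',x'')$. Only the inequality $\Dis(x,B_{x''})\ge \max_{x'}M(x,x',x'')$ is immediate, since $B_{x''}$ is contained in each bounding half-space and so is at least as far from $x$ as any single bisector. The reverse inequality is the crux: it asks that a \emph{single} friend $x'$ shield the entire ball up to the zone radius, whereas \Cref{def:zone} only requires that \emph{some} friend --- possibly varying with $y$ --- be nearer than $x''$ at each point. Establishing equality amounts to showing that the point of $B_{x''}$ nearest to $x$ lies on a single bisector, i.e.\ that a lone friend--enemy pair is active there, rather than at a lower-dimensional corner where several bisectors meet. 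This is the step I would scrutinize most carefully, because when $x$ sits opposite such a corner the nearest point of $B_{x''}$ can be strictly farther from $x$ than every individual bisector, and the argument must either rule this configuration out or invoke the precise geometric hypotheses under which it cannot occur.
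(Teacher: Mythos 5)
Your decomposition is sound, and it is essentially a rigorous rendering of what the paper itself attempts: for a safe $x$, $\zone(x)$ is the distance from $x$ to the bad set, that set is the union over $x''\in\nfriend(x)$ of the cells $B_{x''}$, the outer $\min$ follows, and $\Dis(x,B_{x''})\ge\max_{x'}M(x,x',x'')$ is immediate. But the step you flagged as the crux is a genuine gap, and it cannot be filled: the corner configuration you describe really occurs, and nothing in the hypotheses excludes it. Take $x=(0,0)$, $\friend(x)=\{(-1,1),(1,1)\}$, $\nfriend(x)=\{(0,3)\}$, so $x$ is safe ($\sqrt{2}<3$). The two bisectors are the lines $a+2b=3.5$ and $-a+2b=3.5$, each at distance $7/(2\sqrt{5})\approx 1.565$ from $x$, so $\sep(x)=7/(2\sqrt{5})$. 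However, $B_{(0,3)}$ is the wedge $\{(a,b):a+2b\ge 3.5,\ -a+2b\ge 3.5\}$, and the foot of the perpendicular from $x$ to either bisector violates the other inequality, so the nearest point of the wedge to $x$ is its apex $(0,1.75)$. Hence $\zone(x)=1.75>\sep(x)$: the min--max formula is \emph{a} zone of the correct sign (so $0<\sep(x)\le\zone(x)$), but it is not the maximal one, and the proposition as stated is false. No completion of your plan can prove it; only the one-sided inequality survives.

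For comparison, the paper's proof consists exactly of your easy direction (via \Cref{lem:sep}, choosing for each enemy $z''$ a friend $z'$ depending on it), followed by a one-sentence maximality claim: that the intersection point $w$ at distance $\sep(x)$ from $x$ is of equal distances from $\friend(x)$ and $\nfriend(x)$. That claim is precisely the single-active-pair assumption you declined to take for granted, and it fails in the example above: the foot $w=(0.7,1.4)$ is equidistant from the active pair $(-1,1)$ and $(0,3)$ (both at distance $\sqrt{3.05}$), but the other friend $(1,1)$ lies at distance $0.5$ from $w$, so $w$ sits strictly inside the good region and witnesses nothing about larger radii failing. So your scrutiny was well placed: you located a hole that the paper's own argument papers over, and the honest conclusion is that only $\sep(x)\le\zone(x)$ (with matching sign) is provable --- a weakening that also affects \Cref{prop:stabseprel}, whose proof invokes the maximality of $\sep(x)$, although the inequality $\stab(x)\le\sep(x)$ claimed there can still be derived directly from the triangle inequality.
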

\begin{proof}
Let $x$ be a safe input, and $y$ be an input such that \[\dis(x,y)< \min_{x'' \in \nfriend(x)} \max_{x' \in \friend(x)} \frac{\dis^2(x,x'') - \dis^2(x,x') }{2\dis(x',x'')}.\]
We first show that $y$ is closer to $\friend(x)$ than to $\nfriend(x)$.
Let $z''\in \nfriend(x)$,
it suffices to show that there exist $z' \in \friend(x)$ such that $\dis(y,z')<\dis(y,z'')$. 
Notice that
\[\dis(x,y)<\max_{x' \in \friend(x)} \frac{\dis^2(x,z'') - \dis^2(x,x') }{2\dis(x',z'')}. \]
Therefore, there exist a $z'\in \friend(x)$ for which  
\[\dis(x,y)< \frac{\dis^2(x,z'') - \dis^2(x,z') }{2\dis(z',z'')}\]
Thus, since $x$ is a safe point,  using~\Cref{lem:sep}, we conclude that $\dis(y,z')< \dis(y,z'')$.
The proof follows similar arguments for dangerous points,  taking the distances as $-\sep$ and flipping the inequalities.

To show maximality, observe that the intersection point marked by $w$ in~\Cref{fig:Sep-proof}, which is at distance $\sep(x)$ from $x$,  can be easily shown to be of equal distances from $\friend(x)$ and $\nfriend(x)$.
\end{proof}

While separation provides the maximal zone, it is expensive to calculate. As can be seen  in~\Cref{def:practicalsep}, to estimate the separation of one specific input, we go over many triplets of inputs. The exact amount is unbounded and depends on the dataset.  Thus, separation is infeasible to compute in near real-time. Therefore, when time or computation resources are limited, we require a different and computationally simpler notion. 
Accordingly, the following section provides an efficient approximation of the separation measure.

\subsection{Fast-Separation Approximation}
\label{sec:stab}
We approximate the separation of a given input using only its distance from $\friend(x)$ and its distance from $\nfriend(x)$. 
This simplification allows us to calculate a zone for any given point, which is not necessarily the maximal one. 
The reliance on these two distances enables a faster calculation since we do not perform an exhaustive search over many triplets of inputs.
In particular, we do not consider the geometric positioning of the inputs that determine the distance from these sets.

\begin{defn}[Fast-Separation] 
\label{def:stab}

Given a model $\model$, the fast-separation of an input $x$, denoted $\stab(x)$, is defined as:
\[\stab(x)=\frac{\Dis(x,\nfriend(x))- \Dis(x,\friend(x))}{2}\]
\end{defn}

Notice that just as is the case for separation, if $x$ is a safe input, its fast-separation value will be strictly positive and non-positive otherwise.

\Cref{fig:Stab_theory} illustrates the notion of fast-separation. In particular, it exemplifies why it only provides an approximation of the more accurate separation measure. It encapsulates a zone that is less than or equal to that of separation. 
Sub-figure  (a) demonstrates a case in which $\stab(x)=\sep(x)$, while sub-figure (b) presents a case where $\sep(x)$ is considerably larger than $\stab(x)$.

\begin{figure}[!t]
    \centering
    \subfloat[$\stab(x) = \sep(x) = 0.5$ ]{{\includegraphics[width=0.75\linewidth]{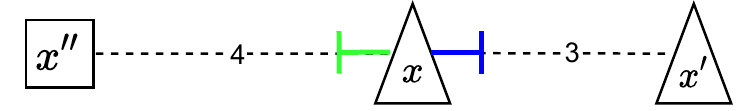} }}
    \\
    \centering 
\subfloat[$0.5=\stab(x)   \neq \sep(x) = 3.5$ ]{{\includegraphics[width=0.8\linewidth]{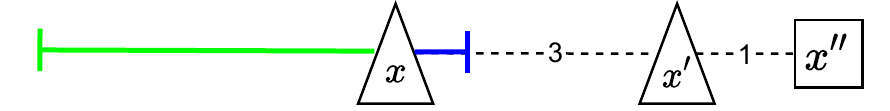} }}

    \caption{Geometric representation of the  induced zones of $\stab$ and $\sep$ for different input alignments.
    $\stab$ is represented by blue arrows and $\sep$ by green arrows.}
    \label{fig:Stab_theory}
\end{figure}

\begin{prop}
\label{prop:stabseprel}
$\stab$ is a lower bound of the separation $\sep$ in the sense that 
for every safe (dangerous) input $0 \leq \stab(x) \leq \sep(x)$ ($\sep(x) \leq \stab(x) \leq 0$).
\end{prop}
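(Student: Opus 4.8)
The plan is to treat the two cases --- $x$ safe and $x$ dangerous --- separately, exploiting the sign symmetry between them. In either case the claimed sign of $\stab(x)$ is immediate: by~\Cref{def:stab}, $\stab(x)=\tfrac{1}{2}\big(\Dis(x,\nfriend(x))-\Dis(x,\friend(x))\big)$, which is nonnegative exactly when $x$ is closer to $\friend(x)$ than to $\nfriend(x)$, i.e.\ exactly when $x$ is safe, and nonpositive when $x$ is dangerous. So the real content of the proposition is the comparison $\stab(x)\leq\sep(x)$ in the safe case and $\sep(x)\leq\stab(x)$ in the dangerous case. Throughout I would write $d_F=\Dis(x,\friend(x))$ and $d_N=\Dis(x,\nfriend(x))$, and let $x'_0\in\friend(x)$ and $x''_0\in\nfriend(x)$ be points realizing these minimal distances.

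For the safe case I would bound $\sep(x)$ from below. Since $\sep(x)$ is a minimum over $x''\in\nfriend(x)$ of an inner maximum over $x'\in\friend(x)$, it suffices to show that for every fixed $x''$ the inner maximum is at least $\stab(x)$; and to lower-bound a maximum it is enough to substitute one convenient competitor, for which I would take the nearest friend $x'_0$. The term then becomes $\tfrac{\dis^2(x,x'')-d_F^2}{2\dis(x'_0,x'')}$. The key manipulation is to apply the triangle inequality $\dis(x'_0,x'')\leq \dis(x,x'_0)+\dis(x,x'')=d_F+\dis(x,x'')$ in the denominator and factor the numerator as a difference of squares; because the numerator is positive (as $x$ is safe, $\dis(x,x'')\geq d_N>d_F$), replacing the denominator by the larger quantity only decreases the fraction, and the difference of squares then cancels to leave $\tfrac{\dis(x,x'')-d_F}{2}$. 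Finally $\dis(x,x'')\geq d_N$ yields the bound $\tfrac{d_N-d_F}{2}=\stab(x)$; since this holds for every $x''$, taking the minimum gives $\sep(x)\geq\stab(x)$.

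The dangerous case is dual, and here I would bound $\sep(x)$ from above. Since $\sep(x)$ is a minimum over $x''$, it now suffices to exhibit a single $x''$ whose inner maximum is at most $\stab(x)$, and the natural choice is the nearest nonfriend $x''_0$, so that $\dis(x,x''_0)=d_N$. One then shows that for \emph{every} $x'\in\friend(x)$ the corresponding term is at most $\stab(x)$. The arithmetic mirrors the safe case, but with the numerator $d_N^2-\dis^2(x,x')$ now nonpositive (since $\dis(x,x')\geq d_F\geq d_N$); accordingly the triangle inequality is applied in the form $\dis(x',x''_0)\leq \dis(x,x')+d_N$, and because the numerator is nonpositive, enlarging the denominator now increases the fraction, collapsing it to $\tfrac{d_N-\dis(x,x')}{2}\leq\tfrac{d_N-d_F}{2}=\stab(x)$.

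The main obstacle I anticipate is one of bookkeeping rather than depth: keeping straight the direction of every inequality as the sign of the numerator flips between the two cases, and, in tandem, selecting the correct extremal point (nearest friend for the safe lower bound, nearest nonfriend for the dangerous upper bound) and the correct triangle-inequality bound so that the difference-of-squares telescopes cleanly. Once the right competitor is fixed inside the min/max and the triangle inequality is oriented correctly, the cancellation is routine; the only genuine care needed is to verify strict positivity of the denominators (using that $\tr$ points are distinct from $x$) so that the divisions are legitimate.
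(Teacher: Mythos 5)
Your proof is correct, but it takes a genuinely different route from the paper's. The paper argues indirectly through the notion of zones: invoking~\Cref{prop:sep} (that $\sep(x)$ is the \emph{maximal} zone), it reduces the claim to showing that $\stab(x)$ is itself a zone, i.e., that any $y$ with $\dis(x,y)<\stab(x)$ satisfies $\Dis(y,\friend(x))<\Dis(y,\nfriend(x))$, which it establishes by a chain of triangle inequalities through the nearest friend and nearest non-friend (the dangerous case is left as ``similar arguments''). You instead work directly with the min-max formula of~\Cref{def:practicalsep}: in the safe case you lower-bound every inner maximum by substituting the nearest friend and applying the triangle inequality $\dis(x'_0,x'')\leq \dis(x,x'_0)+\dis(x,x'')$ in the denominator so that the difference of squares cancels; in the dangerous case you upper-bound the outer minimum by choosing the nearest non-friend and bounding every term of the inner maximum, with the inequality direction correctly flipped because the numerator changes sign. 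Both arguments ultimately rest on the same ingredients (triangle inequality plus extremality of the nearest friend/non-friend), but yours buys self-containedness: it never uses~\Cref{prop:sep} or~\Cref{lem:sep} (whose proofs the paper only sketches via ``trigonometric calculations''), and it treats both cases explicitly rather than by analogy. What the paper's route buys in exchange is the geometric content --- it exhibits $\stab(x)$ as a bona fide zone, which is the semantic fact the rest of the paper leans on --- whereas your verification is purely algebraic on the two defining formulas. Your worry about denominators is easily dispatched: $\dis(x',x'')>0$ because a friend and a non-friend carry different labels and are hence distinct points, and your surrogate denominators $\dis(x,x')+\dis(x,x'')$ are positive for the same reason.
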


\begin{proof}
Let $x$ be a safe input. 
Since~\Cref{prop:sep} shows that $\sep(x)$ is the \emph{maximal} zone, it suffices to show that 
$\stab(x)$ is a zone of $x$.
Let $y$ be a point such that \[\dis(x,y)< \stab= \frac{\Dis(x,\nfriend(x))- \Dis(x,\friend(x))}{2} .\]
We show that $\Dis(y,\friend(x))<\Dis(y,\nfriend(x))$.
Take $z' \in \friend(x)$ and  $z'',w \in \nfriend(x)$  such that $\dis(x,z')=\Dis(x,\friend(x))$,  $\dis(x,z'')=\Dis(x,\nfriend(x))$, 
and $\dis(y,w)=\Dis(y,\nfriend(x))$.
Using the triangle inequality we get:
\begin{gather*}
     \Dis(y,\friend(x))\leq \dis(y,z')\leq \dis(x,z')+ \dis(x,y)\\
    < \dis(x,z')+ \frac{\dis(x,z'')-\dis(x,z')}{2}
    =\frac{\dis(x,z'')+\dis(x,z')}{2}\\
    = \dis(x,z'') - \frac{\dis(x,z'')-\dis(x,z')}{2}
    < \dis(x,z'')-\dis(x,y)\\
    \leq \dis(x,w)-\dis(x,y) \leq \dis(y,w)
    = \Dis(y,\nfriend(x))
\end{gather*}
For dangerous points, the proof follows similar arguments, switching $\friend(x))$ and $\nfriend(x))$.
\end{proof}

\cref{prop:stabseprel} shows that the absolute value of $\stab(x)$ is always smaller than or equal to that of $\sep(x)$ and that they have the same sign. Thus, fast-separation is an approximation of separation in the sense that it uses smaller zones. 
The following proposition further provides an approximation bound for fast-separation.

\begin{prop}
\label{prop:bound}

The following holds for any point $x$:
\[|\sep(x)-\stab(x)|\leq \frac{\Dis(x,\friend(x))+ \Dis(x,\nfriend(x))}{2}.\]
\end{prop}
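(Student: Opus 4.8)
The plan is to reduce the claimed two-sided bound to a single clean inequality comparing $\sep(x)$ with the distance to the nearer opposing set, and then to establish that inequality directly from the formula in \Cref{def:practicalsep} together with the triangle inequality. First I would invoke \Cref{prop:stabseprel}: since $\stab(x)$ and $\sep(x)$ always share the same sign and satisfy $|\stab(x)|\le|\sep(x)|$, the quantity $|\sep(x)-\stab(x)|$ equals $|\sep(x)|-|\stab(x)|$, so beyond separating the safe and dangerous cases no further absolute-value splitting is needed. For a safe input $x$ I would substitute $\stab(x)=\tfrac12\bigl(\Dis(x,\nfriend(x))-\Dis(x,\friend(x))\bigr)$ from \Cref{def:stab} into the target. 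A direct rearrangement then shows that $\sep(x)-\stab(x)\le\tfrac12\bigl(\Dis(x,\friend(x))+\Dis(x,\nfriend(x))\bigr)$ is equivalent to the much simpler statement $\sep(x)\le\Dis(x,\nfriend(x))$, because the $\Dis(x,\friend(x))$ terms cancel. The dangerous case is symmetric and reduces, by the same algebra with the roles of $\friend$ and $\nfriend$ interchanged, to $|\sep(x)|\le\Dis(x,\friend(x))$.

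It then remains to prove $\sep(x)\le\Dis(x,\nfriend(x))$ for safe $x$ (the dangerous case being identical after swapping the two sets). Here I would exploit the outer minimum in \Cref{def:practicalsep}: choosing $z''\in\nfriend(x)$ to be a nearest opposing input, so that $\dis(x,z'')=\Dis(x,\nfriend(x))$, yields the upper bound $\sep(x)\le\max_{x'\in\friend(x)}\frac{\dis^2(x,z'')-\dis^2(x,x')}{2\dis(x',z'')}$. I would then bound each term of this maximum by $\dis(x,z'')$. Writing $a=\dis(x,z'')$, $b=\dis(x,x')$ and $c=\dis(x',z'')$, the required inequality $\frac{a^2-b^2}{2c}\le a$ is immediate when $b\ge a$ (the numerator is non-positive while the right side is non-negative), and when $b\le a$ it follows by factoring $a^2-b^2=(a-b)(a+b)$, applying the triangle inequality $a-b\le c$ on the triangle with vertices $x,x',z''$ to get $\frac{a^2-b^2}{2c}\le\frac{a+b}{2}$, and finally using $b\le a$. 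Note that $c>0$ throughout, since $\friend(x)$ and $\nfriend(x)$ are disjoint, so the division is legitimate.

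The step I expect to be the main obstacle is the final term-wise bound $\frac{a^2-b^2}{2c}\le a$: it is the only place where genuine geometry (via the triangle inequality) enters, and it requires care because the inner maximum ranges over \emph{all} friends $x'$, including ones farther from $x$ than $z''$. Consequently the sign of $a^2-b^2$ cannot be fixed in advance, and the argument must split on whether $b\le a$. Everything else is routine bookkeeping: the reduction through \Cref{prop:stabseprel}, the cancellation of the $\Dis(x,\friend(x))$ terms in the rearrangement, and transcribing the safe-case argument into the symmetric dangerous-case argument.
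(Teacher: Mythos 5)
Your proof is correct, and its geometric core coincides with the paper's: both arguments fix a nearest opposing point $z''$ with $\dis(x,z'')=\Dis(x,\nfriend(x))$ and control $\frac{\dis^2(x,z'')-\dis^2(x,x')}{2\dis(x',z'')}$ by factoring the numerator and applying the triangle inequality $\dis(x,z'')-\dis(x,x')\le \dis(x',z'')$. The difference is in the decomposition, and yours buys something real. The paper carries $\stab(x)$ through the entire chain of inequalities, instantiates the inner maximum at its maximizer $z'$, and then needs the claim $\dis(x,z')\le \Dis(x,\nfriend(x))$, which it justifies only with the verbal remark that for a safe point the maximizing friend cannot lie farther away than $\nfriend(x)$. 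You instead cancel the $\Dis(x,\friend(x))$ terms at the outset, reducing the whole proposition to the transparent statement $\sep(x)\le\Dis(x,\nfriend(x))$ (resp.\ $|\sep(x)|\le\Dis(x,\friend(x))$ for dangerous $x$), and then bound \emph{every} term of the inner maximum via $\frac{a^2-b^2}{2c}\le a$ with an explicit case split on the sign of $a^2-b^2$; this requires no property of the maximizer at all, and so it closes the small hand-wave in the paper's final step. One caution on your dangerous case: it is not literally obtained by ``swapping the two sets,'' since the min--max in \Cref{def:practicalsep} is not symmetric in $\friend(x)$ and $\nfriend(x)$. What does work is to lower-bound the inner maximum by its value at the \emph{nearest friend} and apply your inequality with $a$ and $b$ interchanged, which gives $\sep(x)\ge -\Dis(x,\friend(x))$ term-by-term over the outer minimum; the paper is equally terse on its own dangerous case, so this is a one-line repair rather than a gap in your approach.
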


\begin{proof}
We here prove the bound for safe points $x$, the proof for dangerous points is similar. 
Let $x$ be a safe point. By definition:
\begin{align*}
&|\sep(x)-\stab(x)|= \sep(x)-\stab(x) =  \\
=&\min_{x'' \in \nfriend(x)} \max_{x' \in \friend(x)} \frac{\dis^2(x,x'') - \dis^2(x,x') }{2\dis(x',x'')}\\
&\quad -\frac{\Dis(x,\nfriend(x))- \Dis(x,\friend(x))}{2} 
\end{align*}
Let $z'' \in \nfriend(x)$  be a point such that $\dis(x,z'')=\Dis(x,\nfriend(x))$, and let $z'\in \friend(x)$ be a point for which the  maximum on the expression above is obtained. Then, we have: 
\begin{small}
\begin{align}
&|\sep(x)-\stab(x)| \notag \\
\leq&\max_{x' \in \friend(x)} \frac{\dis^2(x,z'') - \dis^2(x,x') }{2\dis(x',z'')} - \frac{\dis(x,z'')- \Dis(x,\friend(x))}{2}\label{eq1: 1}\\
=&\frac{\dis^2(x,z'') - \dis^2(x,z') }{2\dis(z',z'')} - \frac{\dis(x,z'')- \Dis(x,\friend(x))}{2}\label{eq1: 2}\\
\leq& \frac{\dis(x,z'') + \dis(x,z') }{2} - \frac{\dis(x,z'')- \Dis(x,\friend(x))}{2}\label{eq1: 3}\\
=&\frac{\dis(x,z') + \Dis(x,\friend(x))}{2} \\
\leq &\frac{\Dis(x,\friend(x))+ \Dis(x,\nfriend(x))}{2}
\label{eq1: 5}
\end{align}
\end{small}
The first inequality (\Cref{eq1: 1}) holds due to the definition of the minimum function.
The second inequality (\Cref{eq1: 3}) is due to the triangle inequality.
The last inequality (\Cref{eq1: 5}) holds because, since $x$ is a safe point, the maximal distance between $x$ and $z'$ can't be greater than the distance from $x$ to $\nfriend(x)$.
\end{proof}

Notice that the above bound is tight, in the sense that there exists an example witnessing the exact bound, as shown in~\Cref{fig:bound} below.
\begin{figure}[!h]
    \centering
    \includegraphics[width=0.35\textwidth]{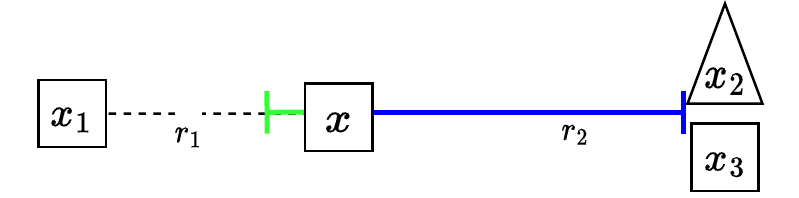}
    \caption{Example of a point $x$ with $|\sep(x)-\stab(x)|=\frac{\Dis(x,\nfriend(x)) + \Dis(x,\friend(x))}{2} $}
    \label{fig:bound}
\end{figure}

\subsection{Predicting Confidence from Separation}
\label{sec:conf}
At this point, we showed how to calculate geometric separation and fast-separation for points (the latter can be done efficiently). Next, we take the calculated (fast-)separation value and derive a confidence estimation $\con$. We use a validation set that is disjoint from the train and test sets to calculate for each input $\sep$ or $\stab$  values. 
We perform a fitting to map $\sep$ or $\stab$ values to confidence probabilities. The fitting is done between $\sep$ (or $\stab$) values and the ratios of correct classifications (on the validation set) for each unique value. 
E.g., if for $\stab$ value of $10$ we see that 90\% of the points are classified correctly then we'll add the pair $\langle 10, 0.9 \rangle$ to the fitting function. 
We expect very low confidence values for highly negative (fast-)separation values, and we expect to approach 100\% confidence when the values become positive enough. 
The regression function we finally get accepts a (fast-)separation value in $\mathbb{R}$ and outputs a scalar in $[0,1]$ indicating the confidence estimation, i.e., the~predicted success probability for inputs with that (fast-)separation score. 

In principle, our method can accept most post-hoc calibration methods to perform the fitting. In this paper, we use isotonic regression as our fitting fuction. Such a function was shown to work best for the tested workloads both for our geometric signal and for the model's original signal, as done in~\citep{scikit-learn}.

\section{Experimental Results}
\label{sec:results}

This section provides experimental results following the method described in the previous section. We first introduce the datasets, models, and evaluation criteria and then continue to experimental results.

\subsection{Methodology}\label{sec:method}
\subsubsection{Datasets}
Our evaluation uses the following standard datasets.
\begin{itemize}
   \item \emph{Modified National Institute of Standards and Technology database (MNIST)}~\citep{MNIST}, which consists of  hand-written images designed for training various image processing systems. It includes 70,000 28×28 grayscale images belonging to one of ten labels.
   
   \item \emph{Fashion MNIST (Fashion)}~\citep{fashion_MNIST}, which is a dataset comprising of 28×28 grayscale images of 70,000 fashion products from 10 categories. 
   
   \item \emph{German Traffic Signs Recognition Benchmark (GTSRB)}~\citep{GTSRB}, which is a large image set of traffic signs devised for
    the single-image, multi-class classification problem. It
    consists of 50,000 RGB images of traffic signs, belonging to 43 classes.

    \item \emph{American Sign Language (SignLang)}~\citep{SignLanguage}, which is a database of hand gestures representing a multi-class problem with 24 classes of letters. It consist of 30,000 28×28 grayscale images.
    
    \item \emph{Canadian Institute for Advanced Research (CIFAR10)}~\citep{CIFAR10}, which is a dataset containing 32x32 RGB images of 60,000 objects from 10 classes. 

\end{itemize}
For each dataset\footnote{As is standard practice, we used normalized datasets (e.g., same image size). See our github for details.}, we randomly partitioned the data into  three subsets: train set $\tr$ (60\%), validation set $\vs$ (20\%) and test set $\ts$ (20\%).
The train set is used to calculate fast separation and train the model. 
The validation set is used to evaluate the confidence estimation associated with each fast-separation value. These values, in turn, are used to fit a Sigmoid function. Finally, the test set is used to evaluate the confidence on new inputs that were \emph{not} present in the train and validation sets.

\subsubsection{Models}
In our evaluation, we use the following popular machine learning models: Random Forest (RF)~\citep{RFTheory}, Gradient Boosting Decision Trees (GBDT)~\citep{GBTheory}, and Convolutional Neural Network (CNN)~\citep{CNN}.

We chose these models because they are different: RF and GBDT are tree-based, while CNN is a neural network. 
For RF and GBDT, we configured the meta parameters (e.g., the maximal depth of trees) by cross-validation on the train set. For CNN, we used the configuration suggested by practitioners. 
Our specific configurations as well as the accuracy scores of each of the models are detailed in~\cite{Code}.

\subsubsection{Evaluation Criteria}
To evaluate our method, we compare our (fast-)separation-based confidence estimation to: (a) the built-in confidence in the Sklearn library, (b) the scaling-binning calibrator~\citep{Ana2019Verified} which we call $SBC$, (c) the histogram binning calibrator~\citep{gupta2021distribution} which we call $HB$, (d) the temperature scaling calibrator~\citep{pmlr-v70-guo17a} which we call $TS$, and  
(e) the ensemble temperature scaling calibrator~\citep{pmlr-v119-zhang20k} which we call $ETS$. 
$TS$ and $ETS$ are calibration methods for neural networks thus we only apply those to CNNs. 
Each method received the same baseline model as an input yielding a slightly different calibrated model. 
Note that our method is evaluated against the uncalibrated model as our method does not affect the model. Moreover, it allows us to compare our method against different calibration methods, as shown in~\Cref{tab:mainresult}.

To evaluate the confidence predictions, we use the \emph{Expected Calibration Error (ECE)}, which is a standard method to evaluate confidence calibration of a model~\citep{Xing2020distance,Krishnan2020Improving}.
Concretely, the predictions sample of size $n$ are partitioned into $M$ equally spaced bins  $(B_{m})_{m\leq M}$, and ECE measures the difference between the sample accuracy in the $m^{th}$ bin and the  the average
confidence in it~\citep{naeini2015obtaining}. 
Formally, ECE is calculated by the following formula: 
\[E C E=\sum_{m=1}^{M} \frac{\left|B_{m}\right|}{n} \left| \operatorname{acc}\left(B_{m}\right)-\operatorname{conf} \left(B_{m}\right)\right|\] 
where:
\begin{itemize}
    \item $\operatorname{acc}\left(B_{m}\right)=\frac{1}{\left| B_{m}\right|} \cdot \left| \{x\in B_m : \cls(x)~\text{is correct} \}\right| $, and 
    \item $\operatorname{conf}\left(B_{m}\right)=\frac{1}{\left|B_{m}\right|} \sum_{x \in B_{m}} {\con(x)}$.
\end{itemize}

 \begin{figure}[t]
     \centering
     \includegraphics[width=0.45\textwidth]{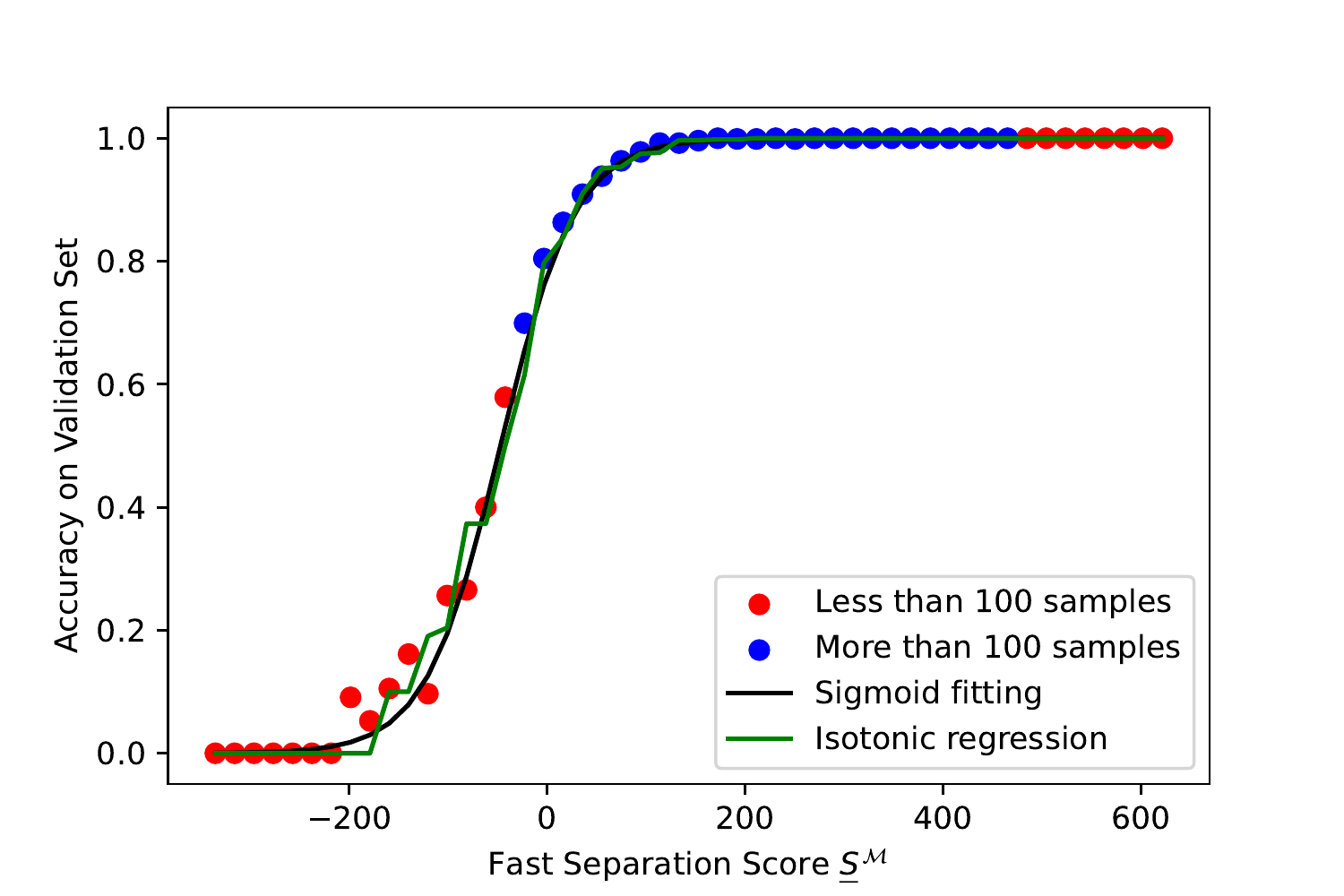}
     \caption{An illustration of the inputs to the fitting function (blue and red dots), and the functions fitted by Sigmoid (black line) and isotonic regression (green line). The inputs are for the MNIST dataset, and the Random Forest model.}
     \label{fig:Sigmoid}
 \end{figure}

\renewcommand{\arraystretch}{1.2}
\begin{table*}[htb!!]
\centering
\caption{ECE(\%) measures with $95\%$ confidence intervals when varying the calibration method, model and dataset.
The parentheses value show the percentage of relative improvement of ${\stab}$ over other calibration method.}
\resizebox{\textwidth}{!}{
\large
\begin{tabular}{cccccccccccc}
\hline
\multicolumn{1}{c}{\textbf{Dataset}}   & \multicolumn{1}{c}{\textbf{Model}} & 
\multicolumn{1}{c}{$\mathbf{\stab}$}  & 
\multicolumn{1}{c}{$\mathbf{\sep}$}   & 
\multicolumn{1}{c}{\textbf{Sklearn-Iso}}  & 
\multicolumn{1}{c}{\textbf{Sklearn-Platt}}  & 
\multicolumn{1}{c}{\textbf{SBC}}  & 
\multicolumn{1}{c}{\textbf{HB}}  &
\multicolumn{1}{c}{\textbf{TS}}  & 
\multicolumn{1}{c}{\textbf{ETS}} \\ \hline
\multirow{3}{*}{\textbf{MNIST}}       & CNN    & .19{\smaller\textpm.04}    & .19{\smaller\textpm.04}   & \hfil-                             & \hfil-                  			  & 8.71{\smaller\textpm.83(97.8\%)}     & .49{\smaller\textpm.09(61.2\%)}   & \multicolumn{1}{l}{.29{\smaller\textpm.06(34.4\%)}}   & .27{\smaller\textpm.05(29.6\%)}   \\        
                                      & RF     & .39{\smaller\textpm.06}    & .40{\smaller\textpm.06}   & .90{\smaller\textpm.12(56.6\%)}    & 1.48{\smaller\textpm.07(73.6\%)}   & 3.66{\smaller\textpm.38(89.3\%)}     & .53{\smaller\textpm.04(26.4\%)}   & \hfil-                                       			        & \hfil-                 \\        
                                      & GB     & .36{\smaller\textpm.07}    & .35{\smaller\textpm.09}   & 1.74{\smaller\textpm.15(79.3\%)}   & 1.94{\smaller\textpm.13(81.4\%)}   & 8.23{\smaller\textpm.24(95.6\%)}     & .48{\smaller\textpm.08(24.9\%)}   & \hfil-                                       				    & \hfil-                 \\ \hline        
\multirow{3}{*}{\textbf{GTSRB}}       & CNN    & .40{\smaller\textpm.10}    & .38{\smaller\textpm.07}   & \hfil-                        	 & \hfil-                   		  & 28.44{\smaller\textpm2.08(98.5\%)}   & .88{\smaller\textpm.32(54.5\%)}   & \multicolumn{1}{l}{1.11{\smaller\textpm.40(63.9\%)}}   & .99{\smaller\textpm.41(59.5\%)}  \\        
                                      & RF     & .37{\smaller\textpm.04}    & .36{\smaller\textpm.07}   & 2.57{\smaller\textpm.13(85.6\%)}   & 4.27{\smaller\textpm.14(91.3\%)}   & 13.71{\smaller\textpm.38(97.3\%)}    & .81{\smaller\textpm.16(54.3\%)}   & \hfil-                                       				    & \hfil-                 \\        
                                      & GB     & .65{\smaller\textpm.11}    & .67{\smaller\textpm.13}   & 9.96{\smaller\textpm.30(93.4\%)}   & 20.25{\smaller\textpm2.17(96.7\%)} & 31.08{\smaller\textpm.43(97.9\%)}    & 1.36{\smaller\textpm.24(52.2\%)}  & \hfil-                                      				        & \hfil-                 \\ \hline        
\multirow{3}{*}{\textbf{SignLang}}    & CNN    & .01{\smaller\textpm.01}    & .01{\smaller\textpm.01}   & \hfil-                      		 & \hfil-                			  & 17.83{\smaller\textpm.90(99.9\%)}    & .22{\smaller\textpm.12(95.4\%)}   & \multicolumn{1}{l}{.25{\smaller\textpm.09(96.0\%)}}   & .24{\smaller\textpm.09(95.8\%)}   \\        
                                      & RF     & .08{\smaller\textpm.02}    & .09{\smaller\textpm.03}   & .39{\smaller\textpm.06(79.4\%)}    & 1.74{\smaller\textpm.08(95.4\%)}   & 16.88{\smaller\textpm.66(99.5\%)}    & .19{\smaller\textpm.06(57.8\%)}   & \hfil-                                   					    & \hfil-                 \\        
                                      & GB     & .08{\smaller\textpm.03}    & .08{\smaller\textpm.02}   & 4.05{\smaller\textpm0.18(98.0\%)}  & 5.96{\smaller\textpm.17(98.6\%)}   & 30.97{\smaller\textpm.17(99.7\%)}    & .47{\smaller\textpm.04(82.9\%)}   & \hfil-                                    				        & \hfil-                 \\ \hline        
\multirow{3}{*}{\textbf{Fashion}}     & CNN    & .79{\smaller\textpm.13}    & .76{\smaller\textpm.13}   & \hfil-                      		 & \hfil-                    		  & 7.33{\smaller\textpm.51(89.2\%)}     & 1.93{\smaller\textpm.20(59.0\%)}  & \multicolumn{1}{l}{.84{\smaller\textpm.11(5.9\%)}}    & .88{\smaller\textpm.15(10.2\%)}   \\        
                                      & RF     & .74{\smaller\textpm.16}    & .79{\smaller\textpm.10}   & .91{\smaller\textpm.11(18.6\%)}    & 3.74{\smaller\textpm.12(80.2\%)}   & 3.45{\smaller\textpm.31(78.5\%)}     & 1.08{\smaller\textpm.15(31.4\%)}  & \hfil-                                     					    & \hfil-                 \\        
                                      & GB     & .73{\smaller\textpm.13}    & .73{\smaller\textpm.08}   & 3.80{\smaller\textpm.20(80.7\%)}   & 5.71{\smaller\textpm3.91(87.2\%)}  & 3.90{\smaller\textpm.46(81.2\%)}     & 1.06{\smaller\textpm.14(31.1\%)}  & \hfil-                                    			            & \hfil-                 \\ \hline        
\multirow{3}{*}{\textbf{CIFAR-10}}    & CNN    & 1.27{\smaller\textpm.19}   & 1.20{\smaller\textpm.15}  & \hfil-                     		 & \hfil-                     		  & 3.57{\smaller\textpm.40(64.4\%)}     & 5.99{\smaller\textpm.26(78.7\%)}  & \multicolumn{1}{l}{5.16{\smaller\textpm.22(75.3\%)}}   & 5.16{\smaller\textpm.43(75.3\%)} \\        
                                      & RF     & 1.15{\smaller\textpm.24}   & 1.19{\smaller\textpm.23}  & 3.25{\smaller\textpm.28(64.6\%)}   & 4.59{\smaller\textpm.24(74.9\%)}   & 2.99{\smaller\textpm.26(61.5\%)}     & 2.51{\smaller\textpm.39(54.1\%)}  & \hfil-                                       					& \hfil-                 \\        
                                      & GB     & 1.25{\smaller\textpm.21}   & 1.31{\smaller\textpm.16}  & 7.57{\smaller\textpm.25(83.4\%)}   & 8.39{\smaller\textpm.18(85.1\%)}   & 2.70{\smaller\textpm.34(53.7\%)}     & 2.80{\smaller\textpm.24(55.3\%)}  & \hfil-                                       				    & \hfil-                 \\ \hline        
			   			   
\end{tabular}}
\label{tab:mainresult}
\end{table*}

\subsection{Fitting Function}

Post-hoc calibration methods based on fitting functions typically use either a logistic (Sigmoid) or an isotonic regression~\citep{Zadrozny2002Transforming}. 
Isotonic regression fits a non-decreasing free-form line to a sequence of observations. In comparison, Sigmoid is a continuous step function.  
We used both fitting functions on our (fast-)separation values and obtained similar accuracy. 
We opt here to present the isotonic regression as it provides the best results, as motivated by~\Cref{fig:Sigmoid}. 

\Cref{fig:Sigmoid} illustrates an example of the success ratio of the Random Forest model for MNIST inputs with varying values of $\stab$ scores (similar behavior were observed for the various models and datasets).  We clustered inputs with a similar score together (into 50 bins overall) as each classification is correct or not, and we are looking for the average.  The black line represents the Sigmoid function and the green line represents the isotonic regression.
As can be observed, both regressions are nearly identical on all the points with positive $\stab$ values. We eventually chose isotonic regression because it better fitted the few points with negative $\stab$ values. Interestingly, these points were consistently a poor fit for the Sigmoid regression rendering slightly less accurate on average. 
Also, observe that the transition is around the value 0, indicating that the distinction of safe and dangerous points is meaningful in confidence evaluation.

\subsection{Confidence Evaluation}

\cref{tab:mainresult} presents the main experimental results of our work. The table summarizes  ECEs for our method (with bin size $30$).

Each entry in the table describes the ECE, the 95\% confidence interval, and (in parenthesis)  the improvement of our fast-separation-based method over each competitor method. The improvement is calculated using the ratio between the difference between our ECE and the competitor's ECE.
In this experiment, we perform ten random splits of the data into train, validation, and test sets for each model and dataset. We then measure the ECE of the confidence estimation for all test set items, average the result and take the 95\% confidence intervals. 

First, observe that $\stab$ and $\sep$ yield very similar ECEs, and that the differences between them are usually statistically insignificant. Thus, we conclude that $\stab$ is a very good approximation of $\sep$ despite being considerably  simpler to compute. 
The next interesting comparison is between $\stab$ and SKlearn. We use the same fitting function (Isotonic regression) in both cases, but SKlearn performs the calibration on the model's natural uncertainty estimation, and $\stab$ performs the calibration on geometric distances. Thus, the benefit of our approach stems from the geometric signal and not from the chosen fitting function. 

Observe that our $\stab$ improves the confidence estimations consistently and across the board when compared to SKlearn, SBC, and HB.  Specifically, we derive improvements up to 99\% in all tested models, and for all tested datasets. 
Such results demonstrate the potential of geometric signals to improve the effectiveness of uncertainty estimation. 

\subsection{Real-Time Computation}
\Cref{tab:runtime} provides the computational advantage of our method. We used a Macbook Pro with an Intel Core i5 with four processor cores@2.3 GHz and 8GB RAM in this experiment. We measure the throughput of confidence evaluations in predictions per second and the 95\% confidence intervals using five trials for each measurement.

Observe that the dominant factor in operation speed is the dataset. These differences are due to variations in training set sizes, where larger training sets result in slower operation. 
 Importantly, our method runs in 23--46 predictions per second in all but the CIFAR-10 datasets. 
 Such performance is within the ballpark for camera-based applications. For reference, a TV is broadcast in 60 frames per second, and most animation films use up to 24 frames per second. Thus our performance is within an applicable scale.  
 CIFAR-10 is considerably larger, and thus our performance on that dataset is a bit slow. While we can also use parallelism to obtain a faster runtime and mitigate this issue, we plan to address larger training sets in future work.

\begin{table}[t!]
\centering
\caption{Number of confidence estimations per second for the $\stab$-based method with $95\%$ confidence intervals. }

\smaller
\begin{tabular}{@{}ccccc@{}}
\hline
\multicolumn{1}{l}{\textbf{Dataset}} &
  \multicolumn{1}{l}{\textbf{Model}} &
  \textbf{Predictions per second}   \\ \midrule
\multirow{3}{*}{\textbf{MNIST}}       & CNN &  22.73 {\tiny\textpm1.90} \\
                              & RF    & 22.42 {\tiny\textpm 0.66 } \\
                              & GBDT    & 22.91 {\tiny\textpm 0.46} \\
\hline
\multirow{3}{*}{\textbf{GTSRB}}        & CNN & 25.29 {\tiny\textpm 0.69 }\\
                              & RF    & 23.23 {\tiny\textpm 1.19} \\
                              & GBDT    &  21.78 {\tiny\textpm 2.18} \\
\hline
\multirow{3}{*}{\textbf{SignLang}}     & CNN & 46.61 {\tiny\textpm 0.18} \\
                              & RF    & 45.16 {\tiny\textpm 2.71}  \\
                              & GBDT & 46.84 {\tiny\textpm 0.40} \\
                              
\hline
\multirow{3}{*}{\textbf{Fashion}}      & CNN &  22.99 {\tiny\textpm 0.16} \\
                              & RF    &  22.85 {\tiny\textpm 0.03} \\
                              & GBDT &  23.20 {\tiny\textpm 0.31} \\
                              
\hline
\multirow{3}{*}{\textbf{CIFAR10}}      & CNN & 7.08 {\tiny\textpm 0.24} \\
                              & RF    & 6.76    {\tiny\textpm 0.27}\\
                              & GBDT    & 6.84 {\tiny\textpm 0.41}\\
                              \hline
\end{tabular}
\label{tab:runtime}
\end{table}

\section{Conclusion}
Our work uses post-hoc calibration techniques but on a geometry-based signal rather than on the model's confidence estimation. We 
demonstrated the feasibility of our approach in estimating uncertainty for multiple models, and for multiple datasets. Our evaluation shows that our fast-separation method ($\stab$) outperforms post-hoc calibration methods based on the model's confidence consistently and across the board. Our approach reduces the error in confidence estimations by up to 99\% compared to alternative methods (depending on the specific dataset and model).   

In addition, we showed that for moderately-sized standard datasets our method achieves near-real time operation. As suggested by our analysis and indicated by the experimental results, the complexity of calculating $\stab$ depends on the training-set size which implies that very large datasets would be slower, and not run in near real-time. 

Another related limitation of the work presented in this paper is that our current approach requires using also the training set. 

Since shipping the model with as little communication or storage overhead is important, our future research focuses on utilizing a smaller data structure that approximates the entire training set. Thus, we will develop ways to control the effect of the dataset size on the run-time by calculating confidence estimations only on a subset of the dataset. E.g., by pre-processing the data and removing data inputs that are geometrically close to each other and reducing the overheads. 

\bibliography{Chouraqui_457}

\end{document}